\documentclass[1p,times,number]{elsarticle}
\usepackage{amsmath,amssymb,amsthm}
\usepackage{graphicx}
\usepackage{booktabs}
\usepackage{multirow}
\usepackage{array}
\usepackage{xcolor}
\usepackage{enumitem}
\usepackage{algorithm}
\usepackage{microtype}
\usepackage{placeins}
\usepackage{tikz}
\usetikzlibrary{arrows.meta, positioning}
\usepackage{soul}
\newtheorem{proposition}{Proposition}

\newtheorem{remark}{Remark}
\journal{Applied Soft Computing}
\begin{document}
\begin{frontmatter}

\title{Logit-Origin Centering for Singleton Test-Time Adaptation}

\author[inst1]{Mayank Sharma}
\author[inst1]{Rohit Kumar Mourya}
\author[inst1]{Pratik Mazumder}
\affiliation[inst1]{organization={Indian Institute of Technology Jodhpur},
            city={Jodhpur}, country={India}}

\begin{abstract}

Tabular data is used extensively in many real-world use cases. Deep learning models have been developed to deal with tabular data, but generally perform poorly when the test data distribution differs from that of the training data. Researchers have proposed test-time adaptation approaches to deal with this problem. The fully test-time adaptation (FTTA) setting involves adapting deployed classifiers to shifted target distributions using only unlabeled test data. Leading FTTA methods inherit a batch-dependent approach from computer vision literature. This paper demonstrates for the first time that such approaches degrade sharply in strict streaming regimes where examples arrive and must be classified one at a time. This occurs because at a batch size of one, batch-level statistics become unavailable or poorly estimated. We argue that singleton tabular FTTA is not merely a small-batch variant of ordinary FTTA, but a distinct identifiability problem where only the location of the model's score stream remains directly observable. To address this, we propose \emph{Prequential Logit-Origin Centering (PLOC)}, a lightweight approach that keeps the source model frozen and shifts the logit space at each step. PLOC stores only a single running number (the mean of past logits), requires no labels, estimates no priors, and bypasses weight updates entirely. A deferred variant applies a static shift that preserves the source ranking, and thus the AUROC, exactly. Evaluated across five tabular benchmarks, three architectures (MLP, FT-Transformer, and TabTransformer), and five independent source checkpoints, PLOC significantly outperforms strong tabular and entropy-based baselines. 

\end{abstract}

\begin{highlights}
\item Existing test-time adaptation methods are unstable or undefined for a unit-size test batch.
\item We propose a tuning-free test-time adaptation method PLOC for tabular data.
\item PLOC significantly outperforms existing methods on unit size test batches.

\end{highlights}

\begin{keyword}
test-time adaptation \sep tabular data \sep distribution shift \sep calibration \sep streaming inference \sep label shift
\end{keyword}

\end{frontmatter}

\section{Introduction}
A classifier trained on a source distribution is rarely deployed on data drawn from the same distribution. For example, in credit scoring, the applicant pool shifts with the economy; in clinical risk, the case mix differs between hospitals; in education, the student population changes from term to term. Fully test-time adaptation (FTTA) addresses this gap by allowing the model, or its outputs, to adjust using only unlabeled target data, without revisiting the source set or collecting target labels. For tabular data, where gradient-boosted trees and increasingly deep networks are deployed in settings where distribution shift is common, FTTA is an attractive way to recover accuracy lost to drift.

Most existing tabular FTTA methods, however, inherit a design assumption from the vision literature that test time data arrive in \emph{batches} large enough to estimate distributional statistics. FTAT~\cite{ftat} is a recent approach that forms a label-distribution objective and updates model parameters. Its robustness study uses batches of size between $64$ and $1024$. PFT3A~\cite{pft3a} estimates class priors, learns robust features, and explores representative subspaces. The standard adaptation toolkit (Tent~\cite{tent}, EATA~\cite{eata}, SAR~\cite{sar}, and LAME~\cite{lame}) minimizes batch entropy or refines batch outputs through an affinity graph. All of these methods are batch estimators and rely on batch-level statistics.

In this work, we study the strict singleton setting: \textbf{batch size one}. Each prediction must be produced before the next sample is seen, and only a single test point is available at each step. This regime arises in streaming decision systems, where one transaction is approved or declined, one patient is triaged, or one student response is scored before the next arrives, and it exposes a limitation of batched estimators. With a single point, a label-prior estimate is not reliable, an entropy update is driven by one prediction, a neighborhood graph is a single node, and methods that adapt batch-normalization statistics have no batch over which to compute them.

We treat singleton tabular FTTA as an identifiability problem rather than as a small-batch variant of standard FTTA. When only one sample is available per step, batch-level target quantities are difficult to identify, but the score location can still be estimated from the stream. When the source ranking remains useful,  but the score origin shifts, the running mean of target logits provides an unlabeled estimate of that shift. We propose \textbf{Prequential Logit-Origin Centering (PLOC)} that addresses the singleton tabular FTTA problem. At each step, the proposed approach subtracts the running mean of previously observed logits before the output activation and emits the result, with the rule and its analysis given in Section~\ref{sec:method}. PLOC stores only one running number, uses no target labels, estimates no class prior, and updates no model parameters. This gives a minimal adaptation rule for the singleton setting.

The major contributions of this work are as follows:
\begin{itemize}
\item \textbf{Method.} PLOC and its deferred variant: to the best of our knowledge, this is the first singleton-native tabular FTTA method, which adapts only the prequential output-logit origin of a frozen classifier (Section~\ref{sec:method}).
\item \textbf{Theory.} We prove that deferred PLOC preserves source AUROC \emph{exactly} (Proposition~\ref{prop:auroc}) and that PLOC's ranking drift is bounded by a pair-margin quantity that vanishes under running-mean convergence and a no-mass-near-zero-margin condition (Proposition~\ref{prop:drift}). The exact guarantee is confirmed numerically in every dataset--backbone--checkpoint combination (Section~\ref{sec:theory}).
\item \textbf{Evidence.} Across five tabular benchmarks, three backbones, and five source checkpoints at batch size one, PLOC improves accuracy, balanced accuracy, F$_1$, calibration, and likelihood over the tabular method FTAT while leaving AUROC essentially unchanged. The prior-free method PFT3A is undefined at $B{=}1$, and the entropy- and normalization-based methods (Tent, EATA, SAR, and LAME) leave the source predictions unchanged, or nearly so, because the batch-level quantities that drive their updates degenerate when only a single sample is available. A batch-size sweep from one to $1024$ shows PLOC is batch-invariant while batch methods recover only with a larger batch (Sections~\ref{sec:results}--\ref{sec:batch}).
\item \textbf{Scope and failure modes.} We map where PLOC helps and where it does not under controlled prior shift (Section~\ref{sec:scope}).
\end{itemize}

\section{Related Works}
\paragraph{Tabular fully test-time adaptation} FTAT~\cite{ftat} addresses tabular FTTA through a Confident Distribution Optimizer, a Local Consistency Weighter, and Dynamic Model Ensembling, all computed on the current test batch; its robustness study operates with batches between $64$ and $1024$. PFT3A~\cite{pft3a} targets prior-free tabular adaptation by estimating class priors, learning robust features, and exploring representative subspaces. Both are strong methods in their intended batched setting, and both are batch estimators: their objectives are defined over a population of test points rather than a single one. We show in Section~\ref{sec:results} that their core objects degenerate at batch size one, and we verify in Section~\ref{sec:batch} that PFT3A recovers its published batched performance once a larger batch is available, so its singleton failure is a property of the regime and not of our integration. PLOC is complementary to this line: rather than adapting features or priors, it corrects only the scalar location of the output stream, a quantity that remains observable from the stream. More broadly, transfer- and domain-adaptation methods handle distribution shift at training time, through deep transfer with limited target data~\cite{asoc_transfer}, instance-based transfer using domain-adversarial target-data generation and influence-function selection~\cite{asoc_dann}, and multi-source-to-multi-target alignment~\cite{asoc_m2m}. Unlike FTTA, these methods require access to target data before deployment.

\paragraph{Test-time adaptation in vision and the batched assumption} The modern TTA literature originates in image classification, where a model is adapted to corrupted or shifted test data without labels. Tent~\cite{tent} minimizes prediction entropy over batch-normalization affine parameters; EATA~\cite{eata} adds reliability and diversity filtering to choose which test samples drive the update; SAR~\cite{sar} stabilizes the entropy objective with sharpness-aware minimization over normalization parameters. Continual and temporally correlated variants such as CoTTA~\cite{cotta}, NOTE~\cite{note}, and RoTTA~\cite{rotta} extend these ideas to non-stationary streams through teacher--student updates, instance-aware normalization, and robust batch statistics. Every method in this family is built around a batch: it either updates normalization statistics estimated over the batch or minimizes a loss averaged over it. LAME~\cite{lame} is the closest to label-free output correction, refining predictions through a Laplacian objective on the batch affinity graph without touching parameters. At $B{=}1$, on norm-free or frozen-normalization tabular networks, the entropy methods update nothing and LAME's affinity graph collapses to a single isolated node, which is the degeneracy our setting exposes.

\paragraph{Calibration and post-hoc output correction} A separate line adjusts a frozen model's outputs after training. Temperature scaling and Platt scaling~\cite{guo} divide the logits by a learned positive scalar to improve calibration, leaving the decision boundary and the ranking unchanged. Calibration of dynamic neural classifiers remains an active topic in the soft-computing literature, for example, through uncertainty-aware early-exit ensembles that use a last-layer Laplace approximation and implicit ensembling to improve confidence calibration~\cite{asoc_uq_calib}. PLOC is the additive counterpart: it \emph{shifts} the logit origin rather than rescaling it, which is what moves the operating point and the predicted-positive rate (the fraction of samples labeled positive). Unlike temperature scaling, which is fit on a labeled validation set, PLOC estimates its shift from unlabeled test logits alone; and unlike a rescaling it can change which side of the threshold a borderline case falls on, recovering accuracy and F$_1$ rather than calibration alone.

\paragraph{Label shift, covariate shift, and logit adjustment} Classical label-shift estimators, including the EM procedure of Saerens et al.~\cite{saerens}, BBSE~\cite{bbse}, and MLLS/RLLS~\cite{alexandari,rlls}, estimate the target class prior from a batch of unlabeled data and reweight posteriors accordingly. Related corrections for covariate shift adapt the input-distribution mismatch directly, for instance through surrogate kernels~\cite{asoc_covshift}, and streaming classifiers handle changing class priors and imbalance with resampling-based ensembles~\cite{asoc_imb_stream}. Logit adjustment~\cite{menon} shifts logits by \emph{known} class log-frequencies to handle long-tailed training. PLOC differs in kind from both: it estimates no class prior, assumes no known frequencies, and applies a single unlabeled scalar that is not tied to any prior estimate. This is what makes it well posed at $B{=}1$, where a prior estimate from one example is a single indicator; it is also what makes PLOC an operating-point correction rather than a label-shift estimator, a distinction we make precise and probe empirically in Section~\ref{sec:scope}.
\paragraph{Online output and classifier adjustment} A few methods adapt outputs sample by sample. T3A~\cite{t3a} adjusts the classifier online through running pseudo-prototypes, and DUA~\cite{dua} updates normalization statistics online with a small momentum. In the data-stream literature, adaptive online incremental learners similarly track evolving distributions, but they do so through drift detectors and continual parameter updates~\cite{asoc_online_inc}. PLOC is more restrictive than both: it alters neither prototypes nor normalization, only the origin of the classifier's output logits, and it carries one number of state per logit. This restriction makes the method well defined when only one sample is available, and it is also what gives the deferred variant an exact ranking guarantee that prototype- and normalization-based updates cannot offer.

\paragraph{Deep learning for tabular data} Although gradient-boosted trees remain strong on tabular problems, deep architectures have closed much of the gap, and deep tabular models have been studied in the risk domains that motivate our benchmarks, such as credit scoring with deep multiple kernel learning~\cite{asoc_credit}. We use three representative families as source models: a plain multilayer perceptron, the attention-based FT-Transformer~\cite{fttransformer}, and the contextual-embedding TabTransformer~\cite{tabtransformer}. We report every result across all three so that the singleton pathology and the origin-centering fix are seen to be properties of the deployment regime rather than of one architecture.

\paragraph{Streaming and prequential evaluation} Related work in the soft-computing literature studies online classification of evolving streams, including clustering-and-ensemble methods for gradual and abrupt concept drift~\cite{asoc_stream_ens} and semi-supervised active ensembles for imbalanced streams with limited labels~\cite{asoc_active_stream}. Finally, the protocol we adopt, predicting on each example before it can be used for adaptation, is the prequential or test-then-train scheme long studied in online and stream learning~\cite{gama}. The term \emph{prequential} in our method name reflects this discipline directly: the running mean at step $t$ uses only strictly past logits, so every prediction is made before the example it is scored on contributes anything to the model. This is what keeps the singleton evaluation honest and rules out label or future-sample leakage.

\section{Problem setting and the singleton regime}
Let $f_{\theta_0}$ be a frozen {source classifier producing, for an input $x$, a logit vector $z(x)\in\mathbb{R}^K$ over the $K$ classes and a predictive distribution $\mathrm{softmax}(z(x))$}. Target examples arrive as a stream $x_1, x_2, \dots, x_T$. At step $t$, a prediction must be emitted using only $\{x_i\}_{i\le t}$ and no labels. We write $\mathcal{P}_S$ and $\mathcal{P}_T$ for the source and target joint distributions. In this setting the source and target distributions are not the same. The defining feature of the regime is the batch size $B=1$: exactly one example is observed before a decision is required. This is more restrictive than online adaptation with small batches, because the batch-level quantities that FTTA methods generally rely on are not only noisy, but often \emph{undefined or uninformative} at batch size $B=1$. The central design question is not how to shrink a batch method to one sample, but which quantity can be estimated reliably when only one sample is available at a time. We take that quantity to be the location of the score stream, formalized next.

\section{Method}
\label{sec:method}

\subsection{Source model}
We start from a classifier $f_{\theta_0}$ trained on the source distribution $\mathcal{P}_S$. For an input $x$ the network produces a logit vector $z(x)\in\mathbb{R}^K$ over the $K$ classes and a predictive distribution $p(x)=\mathrm{softmax}(z(x))$. Because adding a common constant to all $K$ logits leaves the softmax unchanged, only relative logits carry information, and for $K{=}2$ the network can equivalently emit the single relative logit.
\begin{equation}
z(x) = f_{\theta_0}(x) = \log\frac{p_{\theta_0}(y{=}1\mid x)}{p_{\theta_0}(y{=}0\mid x)},
\end{equation}
with class-one probability $p(x)=\sigma(z(x))$, where $\sigma(u)=1/(1+e^{-u})$ is the logistic function.

All TableShift tasks in our evaluation have $K{=}2$, matching the evaluation protocol of FTAT~\cite{ftat} and PFT3A~\cite{pft3a}, so we use this scalar form throughout and note the general form of each PLOC equation where it differs. The default prediction thresholds $p(x)$ at the operating point chosen on source data, that is $z(x)>0$.

\subsection{Source model training phase}
The parameters $\theta_0$ are obtained prior to deployment by empirical risk minimization of the cross-entropy $\mathbb{E}_{(x,y)\sim\mathcal{P}_S}[-\log p_\theta(y\mid x)]$ on labeled source data, and remain fixed thereafter. In the case of two classes, this scalar-logit formulation reads
\begin{equation}
\theta_0 = \arg\min_{\theta}\ \mathbb{E}_{(x,y)\sim\mathcal{P}_S}\!\big[-y\log\sigma(f_\theta(x)) - (1{-}y)\log\!\big(1-\sigma(f_\theta(x))\big)\big].
\end{equation}
We consider 3 distinct standard tabular architectures for the classifier $f_{\theta_0}$, i.e., an MLP, an FT-Transformer, and a TabTransformer (Section~\ref{sec:results}). After this phase the model is \emph{frozen}: test-time adaptation never updates $\theta_0$, computes no gradients, and stores no optimizer state. Everything that follows operates only on the scalar outputs $z(x)$.

\subsection{Proposed approach}
\paragraph{Prequential logit-origin centering (PLOC)} Under the train-test distribution shift, the source model operating point is often miscentered: the ordering induced by $z(x)$ is still informative, but the threshold sits in the wrong place. The proposed PLOC approach corrects this online by subtracting, at each step, the running mean of the logits seen so far. The prediction rule is
\begin{equation}
\hat p_t = \sigma\!\left(z_t - \mu_{t-1}\right),
\label{eq:ploc}
\end{equation}
where the centering term is the mean of \emph{strictly past} logits,
\begin{equation}
\mu_{t-1} = \frac{1}{t-1}\sum_{i<t} z_i .
\end{equation}
The current sample never participates in its own centering. The running mean admits the $O(1)$ update $\mu_t = \frac{(t-1)\mu_{t-1}+z_t}{t}$, so PLOC needs to store only a single running number, uses no labels, estimates no class prior, and updates no model parameters. For $K>2$, we call the coordinate-wise multiclass extension \emph{PLOC-K}, where $K$ is the number of classes rather than a tunable parameter. It maintains an accumulator $s\in\mathbb{R}^K$, initialized to zero. Before updating $s$, it emits $\hat p_t=\mathrm{softmax}\!\big(z_t-s/(t-1)\big)$ for $t>1$ (using zero centering at $t=1$), predicts $\arg\max_k \hat p_{t,k}$, and then sets $s\leftarrow s+z_t$. The centering term $s/(t-1)$ is exactly the running mean $\mu_{t-1}\in\mathbb{R}^K$ of past logit vectors applied coordinate-wise, so the state is $K$ scalars, and for $K{=}2$ the rule reduces to Eq.~\eqref{eq:ploc}. Figure~\ref{fig:method} depicts the application of PLOC, and Algorithm~\ref{alg:ploc} describes the complete procedure.

\begin{algorithm}
\caption{Prequential Logit-Origin Centering (PLOC). The method stores only the running mean of past logits; no target labels; no tuning.}
\label{alg:ploc}
{\small\begin{tabbing}
\quad\=\quad\=\kill
1:\> $s \gets 0$ \quad{\footnotesize// accumulator} \\
2:\> \textbf{for} $t = 1, 2, \dots$ as $x_t$ arrives \textbf{do} \\
3:\> \> $z_t \gets f_{\theta_0}(x_t)$ \quad{\footnotesize// source logit} \\
4:\> \> $\mu_{t-1} \gets s/(t{-}1)$ if $t>1$, else $0$ \\
5:\> \> \textbf{emit} $\hat p_t \gets \sigma(z_t - \mu_{t-1})$ \\
6:\> \> $s \gets s + z_t$ \\
7:\> \textbf{end for}
\end{tabbing}}
\end{algorithm}

\begin{figure}[!htbp]
\centering
\begin{tikzpicture}[
  font=\small,
  box/.style={draw, rounded corners, minimum height=9mm, align=center, inner sep=4pt},
  src/.style={box, fill=black!5},
  op/.style={draw, circle, minimum size=7mm, inner sep=0pt},
  >={Stealth[length=2.2mm]}
]
\node[box] (x) {test point\\$x_t$};
\node[src, right=8mm of x] (f) {frozen source\\model $f_{\theta_0}$};
\node[box, right=8mm of f] (z) {logit $z_t$};
\node[op, right=12mm of z] (sub) {$-$};
\node[op, right=8mm of sub] (sig) {$\sigma$};
\node[box, right=9mm of sig] (p) {emit $\hat p_t$};
\node[src, below=12mm of sub, align=center] (mu) {running mean\\$\mu_{t-1}=\tfrac{1}{t-1}\sum_{i<t}z_i$};
\draw[->] (x)--(f);
\draw[->] (f)--(z);
\draw[->] (z)--(sub);
\draw[->] (sub)--(sig);
\draw[->] (sig)--(p);
\draw[->] (mu)--(sub);
\draw[->, dashed] (z) |- (mu);
\end{tikzpicture}
\caption{The proposed PLOC applied to perform inference on test point $x_t$. The frozen source model maps the incoming example $x_t$ to a logit $z_t$. PLOC subtracts the running mean $\mu_{t-1}$ of previously observed logits before the sigmoid and emits $\hat p_t=\sigma(z_t-\mu_{t-1})$. The current logit then updates the running mean (dashed) for the next step. No labels are used, and no model parameters change. The only quantity the method stores between the steps is a single number, the running mean of the logits seen so far.}
\label{fig:method}
\end{figure}

\paragraph{Deferred PLOC} When predictions may be produced after a single unlabeled pass over the stream, as in batch-review tabular workflows where a day's transactions are scored together overnight, we center by the full-stream mean instead of the prequential one. The shift is then a single constant,
\begin{equation}
\mu_T = \frac{1}{T}\sum_{i=1}^{T} z_i ,
\end{equation}
applied identically to every score,
\begin{equation}
\hat p_i^{\mathrm{def}} = \sigma\!\left(z_i - \mu_T\right).
\end{equation}
Because one constant is subtracted from every logit, deferred PLOC is the simplest variant to analyze. Section~\ref{sec:theory} shows that the proposed deferred PLOC leaves the ranking untouched exactly, in the two-class case. For $K>2$, this holds for every one-vs-one class comparison but not for one-vs-rest rankings (Remark~\ref{rem:multiclass}). Deferred PLOC is therefore an offline variant whose information access matches a batch method run at $B{=}T$. Unlike the batch methods, however, it uses that access only to estimate a single scalar shift, updates no parameters, and applies one monotone transform, which is what yields the exact ranking guarantee of Proposition~\ref{prop:auroc}. We report it as the analyzable anchor of the method family. PLOC remains the singleton method, and Section~\ref{sec:results} shows the two match closely, so restricting to strictly online operation costs almost nothing.

\paragraph{Interpretation} The proposed PLOC approach estimates only the score-origin shift rather than class priors, neighborhoods, or adaptation gradients. It introduces no hyperparameters: there is no momentum, window, temperature, learned scalar, or tuned threshold. We keep this form fixed throughout the paper, and Section~\ref{sec:results} shows it is still able to outperform the compared methods in the unit batch size FTTA problem setting. PLOC is an operating-point correction and not a label-shift estimator, and we describe in Section~\ref{sec:scope} where that distinction matters.

\paragraph{A probability-space variant (PLOC-prob)} The same centering idea can also be applied after the sigmoid, directly to the probabilities. At each step, this variant subtracts the running mean of the past probabilities and re-centers the result around $\tfrac12$: $\hat p_t = \mathrm{clip}(p_t - \bar p_{<t} + 0.5,\,0,\,1)$, where $p_t=\sigma(z_t)$ and $\bar p_{<t}$ is the running mean of past probabilities. The clip operation keeps the output inside $[0,1]$. This variant shares the centering mechanism of PLOC, but because probabilities are a bounded, nonlinear transform of the logits, it loses the monotone-in-logit structure behind the exact ranking guarantee and needs the clip as an extra safeguard. We report this variant as \emph{PLOC-prob} in the results.

\section{Theoretical analysis}
\label{sec:theory}
We analyze the scalar instantiation of Section~\ref{sec:method}, which covers every tabular benchmark in our evaluation. For the general $K$-class form, the exact guarantee below extends to every pairwise (one-vs-one) class comparison but not to one-vs-rest rankings computed from softmax probabilities. Remark~\ref{rem:multiclass} makes this precise and Section~\ref{sec:vision} measures it. Recall that for a scoring rule $s(\cdot)$ the area under the ROC curve can be written as the probability that a random positive outscores a random negative,
\begin{equation*}
\begin{aligned}
\mathrm{AUROC}(s) ={}& \Pr\big(s(X^+) > s(X^-)\big)
+ \tfrac{1}{2}\Pr\big(s(X^+){=}s(X^-)\big),
\end{aligned}
\end{equation*}
with $X^+,X^-$ drawn from the target positive and negative class-conditionals. AUROC depends on the scores only through their ordering.

\begin{proposition}[Deferred PLOC preserves AUROC exactly]
\label{prop:auroc}
For any fixed $\mu_T\in\mathbb{R}$, the deferred-PLOC scores $s_i=\sigma(z_i-\mu_T)$ induce the same ROC curve as the source logits $z_i$, and therefore $\mathrm{AUROC}(s)=\mathrm{AUROC}(z)$.
\end{proposition}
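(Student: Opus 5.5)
The plan is to show that the map $g(u)=\sigma(u-\mu_T)$ is strictly increasing on $\mathbb{R}$, so that it preserves the ordering of every pair of scores, including ties. Everything about the ROC curve then follows from order-preservation alone.

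\textbf{Step 1: strict monotonicity.} The logistic function $\sigma$ is strictly increasing, since $\sigma'(u)=\sigma(u)(1-\sigma(u))>0$ for every $u$. The translation $u\mapsto u-\mu_T$ is also strictly increasing because $\mu_T$ is a fixed constant. Their composition $g$ is therefore strictly increasing.

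\textbf{Step 2: pairwise orderings and ties are preserved.} For any two logits we have $z_i>z_j \iff s_i>s_j$, and $z_i=z_j\iff s_i=s_j$. The second equivalence uses injectivity, which follows from strict monotonicity. Applied to the random pair $X^+,X^-$, this gives the event identities
\[
\{s(X^+)>s(X^-)\}=\{z(X^+)>z(X^-)\},\qquad \{s(X^+)=s(X^-)\}=\{z(X^+)=z(X^-)\}.
\]
Substituting these into the pairwise-probability formula for AUROC stated above gives $\mathrm{AUROC}(s)=\mathrm{AUROC}(z)$ immediately.

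\textbf{Step 3: the ROC curve itself.} The claim about the curve is slightly stronger and needs its own argument. I would parametrize thresholds. For any threshold $\tau\in\mathbb{R}$ on $z$, define the corresponding threshold $g(\tau)$ on $s$. Strict monotonicity gives $\{s>g(\tau)\}=\{z>\tau\}$, so the pair (TPR, FPR) at $\tau$ for $z$ equals the pair at $g(\tau)$ for $s$.

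The map $g$ is a bijection from $\mathbb{R}$ onto $(0,1)$, and $s$ takes values only in $(0,1)$. Any threshold on $s$ that lies outside $(0,1)$ therefore gives the trivial points $(0,0)$ or $(1,1)$, which the $z$-curve also attains in the limits $\tau\to\pm\infty$. Hence the two sets of operating points coincide. This holds for both the empirical curve over $\{z_i\}$ and the population curve.

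\textbf{Main obstacle.} The only delicate point is the bookkeeping in Step 3. We must check that the threshold reparametrization covers every operating point, including the endpoints and the tie-handling convention (strict versus non-strict inequality). I expect this to be routine because $g$ is a strictly increasing bijection onto the range of $s$. If the ROC curve is defined via linear interpolation across tied scores, Step 2's tie preservation keeps the interpolated segments identical as well.
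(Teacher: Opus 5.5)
Your proposal is correct and takes essentially the same approach as the paper, whose proof also rests solely on $z\mapsto\sigma(z-\mu_T)$ being a strictly increasing bijection onto $(0,1)$, so that strict orderings and ties are preserved for every pair. Your Step 3 reparametrizes thresholds via $\tau\mapsto\sigma(\tau-\mu_T)$ and treats the endpoints, which makes explicit the coincidence of the ROC curves that the paper asserts in a single clause.
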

\begin{proof}
The map $z\mapsto\sigma(z-\mu_T)$ is a strictly increasing bijection of $\mathbb{R}$ onto $(0,1)$, so for every pair it preserves both $z_i>z_j$ and $z_i=z_j$. AUROC depends on the scores only through these orderings, hence $\mathrm{AUROC}(s)=\mathrm{AUROC}(z)$ and the ROC curves coincide.
\end{proof}

\begin{remark}[The multi-class case]
\label{rem:multiclass}
For $K>2$, deferred PLOC subtracts a constant vector $\mu_T\in\mathbb{R}^K$ from every logit vector. For any pair of classes $(k,j)$ the pairwise log-odds shifts by a constant, $\log \hat p^{\mathrm{def}}_k - \log \hat p^{\mathrm{def}}_j = (z_k - z_j) - (\mu_{T,k}-\mu_{T,j})$, so by the argument of Proposition~\ref{prop:auroc} every pair-conditional (one-vs-one) AUROC---computed over the examples whose true class lies in $\{k,j\}$, ranked by this pairwise log-odds (equivalently by $z_k-z_j$)---is preserved exactly. The guarantee is specific to this pairwise-logit score; a one-vs-one AUROC computed from the full-softmax probability of class $k$ is not covered, since that probability depends on all logit coordinates. One-vs-rest rankings computed from softmax probabilities are \emph{not} preserved, because the softmax probability of class $k$ depends on all coordinates of the logit vector and is not a monotone function of $z_k$ alone. The ten-class probe of Section~\ref{sec:vision} confirms both effects: one-vs-one AUROC is unchanged to numerical precision ($|\Delta|\le 1.1\times10^{-8}$ across all $39$ stream--seed runs), while one-vs-rest AUROC shifts by up to $0.07$, predominantly upward on shifted streams.
\end{remark}

PLOC itself applies a \emph{time-varying} shift $\mu_{t-1}$, so it is not a single monotone transform and can, in principle, reorder examples. The next result bounds how much.

\begin{proposition}[PLOC ranking-drift bound]
\label{prop:drift}
Fix a target stream of length $T$, and let $s_t=\sigma(z_t-\mu_{t-1})$ be the PLOC score. For a positive--negative pair $(i,j)$, define
\[
D_{ij}=z_i-z_j,\qquad M_{ij}=\mu_{i-1}-\mu_{j-1},
\]
and let $\psi(u)=\mathbf{1}\{u>0\}+\tfrac{1}{2}\mathbf{1}\{u=0\}$ be the pairwise AUROC contribution. If PLOC changes the contribution of pair $(i,j)$, that is,
\[
\psi(D_{ij}-M_{ij})\neq \psi(D_{ij}),
\]
then
\[
|D_{ij}|\le |M_{ij}|.
\]
Consequently, with
\[
\mathcal{I}_T=\{(i,j): y_i{=}1,\,y_j{=}0,\ |z_i-z_j|\le|\mu_{i-1}-\mu_{j-1}|\},
\]
and with $n_+>0$ positives and $n_->0$ negatives,
\[
\left|\widehat{\mathrm{AUROC}}_T(s)-\widehat{\mathrm{AUROC}}_T(z)\right|
\le
\frac{|\mathcal{I}_T|}{n_+n_-}.
\]
Moreover, along a growing stream, suppose $n_+/T\to\pi_+\in(0,1)$, $n_-/T\to\pi_-\in(0,1)$, $\mu_t\to m$, and the empirical source-score margin condition
\[
\lim_{\varepsilon\downarrow 0}\limsup_{T\to\infty}
\frac{\#\{(i,j): y_i{=}1,\,y_j{=}0,\ |z_i-z_j|\le\varepsilon\}}{n_+n_-}
=0
\]
holds. Then $|\mathcal{I}_T|/(n_+n_-)\to 0$, and therefore
\[
\left|\widehat{\mathrm{AUROC}}_T(s)-\widehat{\mathrm{AUROC}}_T(z)\right|\to 0.
\]
\end{proposition}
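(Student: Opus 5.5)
The plan has two parts. First, a deterministic pairwise lemma, which gives the finite-$T$ bound by summing over pairs. Second, an asymptotic counting argument that separates early-stream pairs from late ones.

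\textbf{Pairwise lemma.} Since $\sigma$ is strictly increasing, $s_i>s_j$ iff $z_i-\mu_{i-1}>z_j-\mu_{j-1}$, i.e.\ iff $D_{ij}-M_{ij}>0$, and likewise for equality. So the PLOC pair contribution is exactly $\psi(D_{ij}-M_{ij})$, while the source contribution is $\psi(D_{ij})$. This is the same monotonicity used in Proposition~\ref{prop:auroc}. Suppose $\psi(D_{ij}-M_{ij})\neq\psi(D_{ij})$. I split on the sign of $D_{ij}$.
\begin{itemize}
\item If $D_{ij}=0$, the bound $|D_{ij}|\le|M_{ij}|$ is trivial.
\item If $D_{ij}>0$, a change forces $D_{ij}-M_{ij}\le 0$. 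Hence $M_{ij}\ge D_{ij}>0$, so $|D_{ij}|\le|M_{ij}|$.
\item If $D_{ij}<0$, a change forces $D_{ij}-M_{ij}\ge 0$. Hence $M_{ij}\le D_{ij}<0$, and again $|D_{ij}|\le|M_{ij}|$.
\end{itemize}

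\textbf{Finite-$T$ bound.} The empirical AUROC is $\frac{1}{n_+n_-}\sum_{y_i=1,y_j=0}\psi(\cdot)$. The difference of the two AUROCs is therefore an average of terms $\psi(D_{ij}-M_{ij})-\psi(D_{ij})\in[-1,1]$. By the lemma, each term is nonzero only for pairs in $\mathcal{I}_T$. The triangle inequality then gives the bound $|\mathcal{I}_T|/(n_+n_-)$.

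\textbf{Asymptotic statement.} This is the step I expect to be the main obstacle. The difficulty is that $\mu_t\to m$ does not make $|M_{ij}|$ small uniformly: pairs involving early indices can have large $|M_{ij}|$, since for instance $\mu_0=0$ and $\mu_1=z_1$. I would handle this by splitting the pairs as follows.
\begin{itemize}
\item Fix $\varepsilon>0$ and choose $t_0$ such that $|\mu_t-m|\le\varepsilon/2$ for all $t\ge t_0$.
\item Then every pair with $i-1\ge t_0$ and $j-1\ge t_0$ has $|M_{ij}|\le\varepsilon$. Any such pair lying in $\mathcal{I}_T$ therefore satisfies $|z_i-z_j|\le\varepsilon$, so it is counted by the margin set at level $\varepsilon$.
\item The remaining pairs have $i\le t_0$ or $j\le t_0$. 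There are at most $t_0(n_++n_-)=t_0T$ of them.
\end{itemize}
This yields
\[
\frac{|\mathcal{I}_T|}{n_+n_-}\le \frac{t_0T}{n_+n_-}+\frac{\#\{(i,j):y_i{=}1,y_j{=}0,\ |z_i-z_j|\le\varepsilon\}}{n_+n_-}.
\]
Since $n_+n_-/T^2\to\pi_+\pi_->0$, the first term is $O(t_0/T)\to 0$ with $t_0$ fixed. Taking $\limsup_{T\to\infty}$ therefore leaves only the margin term at level $\varepsilon$. Letting $\varepsilon\downarrow 0$, the margin condition drives this term to $0$, so $|\mathcal{I}_T|/(n_+n_-)\to 0$. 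Combined with the finite-$T$ bound, this gives the claimed convergence of the AUROC difference.

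The only care needed is the order of quantifiers: $t_0$ depends on $\varepsilon$ but not on $T$. The limit in $T$ is taken first, which kills the $t_0T$ term, and the limit in $\varepsilon$ second, which matches the order of limits in the margin condition.
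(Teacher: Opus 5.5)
Your proposal is correct and follows the paper's proof essentially step for step. Both use the same monotonicity reduction to $\psi(D_{ij}-M_{ij})$ versus $\psi(D_{ij})$ with a sign split on $D_{ij}$, the same averaging argument for the finite-$T$ bound, and the same early/late split of pairs via a cutoff beyond which $|\mu_t-m|\le\varepsilon/2$. Your explicit count of at most $t_0(n_++n_-)=t_0T$ early pairs and your note on the order of limits make precise what the paper states as ``$O(T)$ pairs against $n_+n_-=\Theta(T^2)$.''
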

\begin{proof}
Because $\sigma$ is strictly increasing, the pairwise ordering of $s_i$ and $s_j$ is the ordering of $z_i-\mu_{i-1}$ and $z_j-\mu_{j-1}$. Thus PLOC contributes $\psi(D_{ij}-M_{ij})$ for pair $(i,j)$, whereas the source contributes $\psi(D_{ij})$. If $D_{ij}>0$ and the contribution changes, then $D_{ij}-M_{ij}\le 0$, so $M_{ij}\ge D_{ij}>0$ and $|D_{ij}|\le |M_{ij}|$. If $D_{ij}<0$ and the contribution changes, then $D_{ij}-M_{ij}\ge 0$, so $M_{ij}\le D_{ij}<0$ and again $|D_{ij}|\le |M_{ij}|$. If $D_{ij}=0$, the inequality is immediate. Therefore every pair whose contribution changes belongs to $\mathcal{I}_T$. Since each pairwise contribution changes by at most one and empirical AUROC averages over $n_+n_-$ positive--negative pairs, the finite-sample bound follows.

For the asymptotic statement, fix $\varepsilon>0$. Since $\mu_t\to m$, there exists $N$ such that $|\mu_t-m|\le\varepsilon/2$ for all $t\ge N$. Hence, for all pairs with $i,j>N$, $|\mu_{i-1}-\mu_{j-1}|\le\varepsilon$. Any such pair in $\mathcal{I}_T$ must therefore satisfy $|z_i-z_j|\le\varepsilon$. Pairs with $i\le N$ or $j\le N$ are $O(T)$ in number, while $n_+n_-=\Theta(T^2)$ under the assumed class proportions, so their normalized contribution vanishes. The remaining contribution is bounded by the empirical fraction of positive--negative pairs with $|z_i-z_j|\le\varepsilon$, which vanishes as $\varepsilon\downarrow0$ by the stated margin condition. Thus $|\mathcal{I}_T|/(n_+n_-)\to0$, proving the claim.
\end{proof}

\begin{proposition}[The prequential mean is the minimum-variance origin estimate]
\label{prop:variance}
Among all unbiased causal linear estimators $c_t=\sum_{i<t}w_i z_i$ of a stationary logit origin, with weights summing to one ($\sum_{i<t}w_i=1$) and independent logit noise of variance $\sigma^2$, the uniform prequential mean ($w_i=1/(t-1)$) attains the smallest variance.
\end{proposition}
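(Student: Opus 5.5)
We first fix the model implicit in the statement. Past logits are written as $z_i = m + \varepsilon_i$ for $i<t$, where $m$ is the stationary origin and the noise terms $\varepsilon_i$ are independent, mean zero, with common variance $\sigma^2$. Take a causal linear estimator $c_t=\sum_{i<t} w_i z_i$. Its mean is $\mathbb{E}[c_t]=m\sum_{i<t}w_i$, so the constraint $\sum_{i<t}w_i=1$ is exactly what makes $c_t$ unbiased for every $m$. By independence, its variance is $\mathrm{Var}(c_t)=\sigma^2\sum_{i<t}w_i^2$. The proposition therefore reduces to a finite-dimensional problem: minimize $\sum_{i=1}^{n}w_i^2$ subject to $\sum_{i=1}^{n}w_i=1$, where $n=t-1\ge 1$.

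We solve this with Cauchy--Schwarz. Pairing $w=(w_1,\dots,w_n)$ with the all-ones vector gives
\[
1=\Big(\sum_{i=1}^{n} w_i\Big)^2\le n\sum_{i=1}^{n} w_i^2 ,
\]
so $\mathrm{Var}(c_t)\ge \sigma^2/n=\sigma^2/(t-1)$. Equality in Cauchy--Schwarz holds exactly when $w$ is proportional to the all-ones vector. Together with the constraint, this forces $w_i=1/(t-1)$, which is the prequential mean $\mu_{t-1}$ of Eq.~\eqref{eq:ploc}. The same weights are thus both attainable and the unique minimizer.

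As an alternative check, write $w_i = 1/n + \delta_i$ with $\sum_i\delta_i=0$. Then $\sum_i w_i^2 = 1/n + \sum_i\delta_i^2$, because the cross term $(2/n)\sum_i\delta_i$ vanishes. This exhibits the excess variance $\sigma^2\sum_i\delta_i^2$ explicitly, and it is zero only when every $\delta_i=0$.

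The mathematics here is routine. The main obstacle is stating the hypotheses precisely enough for the variance formula to be valid. That formula needs the noise to be uncorrelated with equal variances; independence is stronger than required. It also needs the weights to be deterministic, or at least not dependent on the realized logits. If either condition fails, the optimum becomes a generalized least-squares weighting, $w\propto \Sigma^{-1}\mathbf{1}$ for noise covariance $\Sigma$, and the uniform mean is no longer optimal. We would state this as a remark after the argument rather than let it weaken the claim.
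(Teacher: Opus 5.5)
Your proof is correct and follows the paper's own argument: write the variance as $\sigma^2\sum_{i<t}w_i^2$, then apply Cauchy--Schwarz against the all-ones vector to get $\sum_{i<t}w_i^2\ge 1/(t-1)$, with equality only at $w_i=1/(t-1)$. Your explicit model $z_i=m+\varepsilon_i$, the $\delta_i$ decomposition and the remark on hypotheses make precise what the paper's one-line proof leaves implicit, though the generalized least-squares weighting $w\propto\Sigma^{-1}\mathbf{1}$ is the right optimum only for the correlated or heteroskedastic noise case, not for data-dependent weights.
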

\begin{proof}
The variance is $\sigma^2\sum_{i<t}w_i^2$, and Cauchy--Schwarz gives $\sum_{i<t}w_i^2\ge 1/(t-1)$ on the simplex $\sum_{i<t}w_i=1$, with equality only for the uniform weights.
\end{proof}

Proposition~\ref{prop:variance} is the theoretical counterpart of the design ablation in Section~\ref{sec:ablation}: there, an exponential moving average, a sliding window, a running $z$-score, and a Huber-clipped mean are all empirically no better than the plain prequential mean, and the faster-forgetting variants are slightly worse, as minimum variance predicts for a stationary stream.

\paragraph{Self-modulation.} The applied shift $\mu_T$ is observable from the unlabeled stream and requires no tuning: it is the source's mean operating point on the target. Across our datasets it spans a wide range, from $\mu_T\approx-1.1$ on HELOC and $-0.6$ on ACS Public Coverage, where the source under-predicts the positive class, to $\mu_T\approx+1.1$ on ANES and ASSISTments, where the target leans positive. PLOC removes this origin shift, and its \emph{benefit} concentrates where the source operating point is far from optimal for threshold metrics (HELOC, ACS Public Coverage; more mildly ASSISTments). Where the source is already well positioned (ANES), PLOC is near-neutral even though the $|\mu_T|$ is not small: the gain is governed by how miscentered the operating point is, not by $|\mu_T|$ alone. The correction is computed from the unlabeled target stream, providing a tuning-free, ranking-preserving adjustment that helps when the operating point is wrong and has little effect otherwise.

\section{Experimental setup}
\paragraph{Datasets} We use five tabular benchmarks from TableShift~\cite{tableshift} under their out-of-distribution splits, chosen to cover credit, healthcare, education, and survey domains with genuine source-to-target drift. All five datasets are two-class prediction tasks, and in each, the target stream comes from a domain held out from source training by the TableShift split.
\begin{itemize}
\item \emph{HELOC}~\cite{heloc}: It comprises of anonymized credit-bureau records of home-equity line-of-credit applicants (financial activity, balances, credit inquiries, delinquency history) from the FICO Explainable Machine Learning Challenge. The task is to predict repayment performance over the two years after account opening, where the positive class repays as negotiated and the negative class becomes at least $90$ days delinquent. The domain split is on a consolidated external risk score, with the target stream containing the applicants whose score falls on the side of the threshold unseen during training.
\item \emph{Diabetes Readmission}~\cite{strack}: It comprises of clinical records of diabetic-patient encounters at $130$ US hospitals (demographics, diagnoses, laboratory tests, medications). The task is to predict whether a patient is readmitted to hospital after discharge. The target domain consists of emergency-room admissions.
\item \emph{ASSISTments}~\cite{assistments}: It comprises of interaction logs from the ASSISTments online mathematics tutoring platform (student performance history, problem attributes, affect indicators). The task is to predict whether a student answers the next problem correctly. The target domain consists of schools not present in the source data.
\item \emph{ANES}~\cite{anes}: It comprises of survey responses from the American National Election Studies Time Series Cumulative Data File (political attitudes, media consumption, demographics). The task is to predict whether a respondent votes in the US national election. The target domain is the Southern census region, excluded from training.
\item \emph{ACS Public Coverage}~\cite{folktables}: It comprises of person-level demographic and socioeconomic records from the American Community Survey, restricted to low-income individuals under $65$. The task is to predict whether an individual holds public health-insurance coverage. The target domain consists of respondents with disabilities.
\end{itemize}
The target stream sizes are $6914$, $50968$, $1906$, $21231$, and $817877$, respectively, for the HELOC, Diabetes Readmission, ASSISTments, ANES and ACS Public Coverage datasets.

\paragraph{Backbones and training} We evaluate three architectures spanning the common tabular deep-learning families: a multilayer perceptron (MLP), an FT-Transformer, and a TabTransformer. Each is trained on source data with five independent initialization seeds ($20$ epochs, batch size $1024$, learning rate $0.01$, weight decay $0.01$), yielding five independently trained source models per architecture. We refer to each of these five frozen trained models as a \emph{source checkpoint}. All adaptation is then performed on these frozen checkpoints.

\paragraph{Protocol} All test-time adaptation is run at \textbf{batch size one}. No target labels are used by any method at any point during adaptation, which we assert programmatically for every run. The classifier produces a probability $\hat p_t\in(0,1)$ for every stream element. The corresponding hard prediction is $\hat y_t=\mathbf{1}\{\hat p_t\ge \tfrac12\}$. Ground-truth target labels $y_t$ are used only in this evaluation stage for computing the performance metrics, and never during adaptation. Let $T$ refer to the stream length and $\mathrm{TP},\mathrm{TN},\mathrm{FP},\mathrm{FN}$ refer to the number of true positives, true negatives, false positives, and false negatives induced by $\hat y$. We report accuracy, balanced accuracy, F$_1$ score, AUROC, $15$-bin expected calibration error (ECE), negative log-likelihood (NLL), and the Brier score, averaged over the five datasets and five source checkpoints ($25$ runs per reported value). We compare against source (no test-time adaptation), Tent~\cite{tent}, EATA~\cite{eata}, SAR~\cite{sar}, LAME~\cite{lame}, FTAT~\cite{ftat}, and PFT3A~\cite{pft3a}. The target performance metrics are computed as follows.
\begin{itemize}
\item \emph{Accuracy (Acc)} is the fraction of correct hard predictions, $\frac{1}{T}\sum_{t=1}^{T}\mathbf{1}\{\hat y_t=y_t\}$.
\item \emph{Balanced accuracy (BAcc)} is the mean of the two per-class recalls, $\tfrac12\big(\frac{\mathrm{TP}}{\mathrm{TP}+\mathrm{FN}}+\frac{\mathrm{TN}}{\mathrm{TN}+\mathrm{FP}}\big)$, which is insensitive to class imbalance.
\item \emph{F$_1$ score} is the harmonic mean of precision and recall for the positive class, $\mathrm{F}_1=\frac{2\,\mathrm{TP}}{2\,\mathrm{TP}+\mathrm{FP}+\mathrm{FN}}$, set to $0$ when the denominator vanishes.
\item \emph{AUROC} is the area under the ROC curve traced by sweeping a decision threshold over the predicted probabilities $\hat p$. As recalled in Section~\ref{sec:theory}, this is equivalent to the Wilcoxon--Mann--Whitney statistic $\Pr(\hat p^+>\hat p^-)+\tfrac12\Pr(\hat p^+=\hat p^-)$ where $\hat p^+$ and $\hat p^-$ denote the scores of randomly drawn positive and negative pairs, respectively. Unlike threshold-dependent metrics, AUROC relies strictly on the relative ranking of the predicted probabilities rather than their absolute values.
\item \emph{ECE} measures calibration of the prediction confidence $c_t=\max(\hat p_t,1-\hat p_t)$. The confidence range is partitioned into $M{=}15$ equal-width bins $B_1,\dots,B_M$, and $\mathrm{ECE}=\sum_{m=1}^{M}\frac{|B_m|}{T}\big|\mathrm{acc}(B_m)-\mathrm{conf}(B_m)\big|$, where $\mathrm{acc}(B_m)$ is the accuracy of the hard predictions whose confidence falls in bin $B_m$ and $\mathrm{conf}(B_m)$ is their mean confidence~\cite{guo}.
\item \emph{NLL} is the mean negative log-likelihood of the true labels, $-\frac{1}{T}\sum_{t=1}^{T}\big[y_t\log\hat p_t+(1-y_t)\log(1-\hat p_t)\big]$, with probabilities clipped to $[10^{-6},\,1-10^{-6}]$ for numerical stability.
\item \emph{Brier score} is the mean squared error between the predicted probability and the label, $\frac{1}{T}\sum_{t=1}^{T}(\hat p_t-y_t)^2$.
\end{itemize}

\begin{table*}[t]
\centering
\caption{Test-time adaptation at batch size one, for different backbones, averaged over 5 datasets and 5 source checkpoints. Best results under each performance metric are shown in \textbf{bold}. AUROC closest to source is preferred. ${}^\dagger$The official PFT3A implementation is undefined at $B{=}1$, so we evaluate our own guarded singleton adaptation of it. At larger batch sizes our integration reproduces its published results (Section~\ref{sec:batch}).}
\label{tab:main}
\small
\begin{tabular}{llccccccc}
\toprule
Backbone & Method & Acc$\uparrow$ & BAcc$\uparrow$ & F$_1$$\uparrow$ & AUROC$\uparrow$ & ECE$\downarrow$ & NLL$\downarrow$ & Brier$\downarrow$ \\
\midrule
\multirow{9}{*}{MLP}
 & Source & 0.612 & 0.638 & 0.580 & 0.733 & 0.160 & 0.732 & 0.252 \\
 & Tent & 0.612 & 0.638 & 0.580 & 0.733 & 0.160 & 0.732 & 0.252 \\
 & EATA & 0.612 & 0.638 & 0.580 & 0.733 & 0.160 & 0.732 & 0.252 \\
 & SAR & 0.612 & 0.638 & 0.580 & 0.733 & 0.160 & 0.732 & 0.252 \\
 & LAME & 0.612 & 0.638 & 0.580 & 0.733 & 0.160 & 0.732 & 0.252 \\
 & FTAT & 0.589 & 0.604 & 0.580 & 0.660 & 0.066 & 0.656 & 0.232 \\
 & PFT3A$^\dagger$ & 0.576 & 0.606 & 0.463 & 0.707 & 0.214 & 1.014 & 0.296 \\
& \textbf{PLOC} & \textbf{0.663} & \textbf{0.670} & \textbf{0.694} & \textbf{0.732} & 0.060 & \textbf{0.611} & 0.210 \\
 & PLOC-prob & 0.659 & 0.669 & 0.683 & 0.732 & \textbf{0.056} & 0.646 & \textbf{0.209} \\
\midrule
\multirow{9}{*}{FT-Transformer}
 & Source & 0.613 & 0.637 & 0.539 & 0.718 & 0.160 & 0.734 & 0.258 \\
 & Tent & 0.613 & 0.637 & 0.539 & 0.718 & 0.160 & 0.734 & 0.258 \\
 & EATA & 0.613 & 0.637 & 0.539 & 0.718 & 0.160 & 0.734 & 0.258 \\
 & SAR & 0.610 & 0.637 & 0.535 & 0.717 & 0.162 & 0.736 & 0.259 \\
 & LAME & 0.613 & 0.637 & 0.539 & 0.718 & 0.160 & 0.734 & 0.258 \\
 & FTAT & 0.593 & 0.595 & 0.561 & 0.639 & 0.098 & 0.678 & 0.243 \\
 & PFT3A$^\dagger$ & 0.583 & 0.552 & 0.605 & 0.569 & 0.162 & 0.928 & 0.279 \\
  & \textbf{PLOC} & \textbf{0.652} & \textbf{0.662} & \textbf{0.682} & 0.718 & \textbf{0.056} & \textbf{0.605} & 0.212 \\
 & PLOC-prob & 0.651 & 0.661 & 0.680 & \textbf{0.718} & 0.058 & 0.610 & \textbf{0.209} \\
\midrule
\multirow{9}{*}{TabTransformer}
 & Source & 0.573 & 0.605 & 0.500 & 0.691 & 0.157 & 0.731 & 0.257 \\
 & Tent & 0.573 & 0.605 & 0.500 & 0.691 & 0.157 & 0.731 & 0.257 \\
 & EATA & 0.573 & 0.605 & 0.500 & 0.691 & 0.157 & 0.731 & 0.257 \\
  & SAR & 0.573 & 0.605 & 0.500 & 0.691 & 0.157 & 0.731 & 0.257 \\
 & LAME & 0.573 & 0.605 & 0.500 & 0.691 & 0.157 & 0.731 & 0.257 \\
 & FTAT & 0.526 & 0.546 & 0.454 & 0.599 & 0.113 & 0.721 & 0.260 \\
 & PFT3A$^\dagger$ & 0.533 & 0.566 & 0.425 & 0.616 & 0.250 & 2.278 & 0.328 \\
  & \textbf{PLOC} & 0.629 & 0.642 & \textbf{0.696} & 0.690 & 0.039 & \textbf{0.614} & \textbf{0.213} \\
 & PLOC-prob & \textbf{0.636} & \textbf{0.644} & 0.659 & \textbf{0.691} & \textbf{0.037} & 0.624 & 0.213 \\
\bottomrule
\end{tabular}
\end{table*}

The evaluated metrics capture distinct aspects of model performance. Accuracy, balanced accuracy, and the $\text{F}_1$-score are threshold-dependent metrics, making them highly sensitive to the choice of operating point. ECE is a binned calibration summary, while NLL and the Brier score are proper scoring rules that reflect both calibration and discrimination. Finally, AUROC isolates discrimination performance by focusing strictly on score ranking. For the calibration and error metrics (ECE, NLL, and Brier score), lower values indicate superior performance, whereas higher values are preferred for all other metrics.

\section{Results}
\label{sec:results}
\paragraph{Main comparison} Table~\ref{tab:main} reports the test-time adaptation performance at batch size one, for different backbones, averaged over five datasets and five source checkpoints. PLOC improves the threshold and probabilistic metrics over both the source model and FTAT while leaving AUROC essentially unchanged, while the entropy- and normalization-based methods (Tent, EATA, SAR, and LAME) return the source predictions unchanged or nearly unchanged. With the MLP backbone, PLOC outperforms FTAT by an absolute margin of $7.4\%$ accuracy and by $11.4$ F$_1$ points. It matches the source AUROC very closely ($0.732$ vs.\ $0.733$), achieves a 62.5\% reduction in ECE (from $0.160$ to $0.060$), and lowers NLL from $0.732$ to $0.611$. We observe a similar pattern on the FT-Transformer backbone. 

The probability-space variant (PLOC-prob) achieves nearly the same calibration as PLOC. Since the two methods share the centering step and differ only in whether it is applied to the logits or to the probabilities, this similarity shows that re-centering the output origin itself, rather than the particular space in which the centering is performed, is what drives the gains. With the TabTransformer backbone, PLOC-prob is marginally ahead of PLOC on accuracy, balanced accuracy, and ECE, though behind on F$_1$ and NLL. The two variants share the centering mechanism and differ only in parameterization, so their gains land close together, and the better variant depends on the target metric. We recommend the logit form (PLOC) whenever the ranking must be protected, since probability-space centering (PLOC-prob) carries no ranking guarantee. The ranking cost of PLOC-prob is visible in the generalization probe of Section~\ref{sec:vision}, where it lowers AUROC ($0.902\to0.895$) while PLOC leaves it unchanged at $0.902$; the exact preservation guarantee of Proposition~\ref{prop:auroc} applies to deferred PLOC. The same behavior recurs on the third backbone, i.e., the TabTransformer. Even in this case, Tent, EATA, SAR, and LAME return the source predictions unchanged (SAR to within numerical precision), since the quantities that drive their updates have nothing to act on at batch size $B{=}1$. FTAT and PFT3A again trade away ranking (AUROC $0.599$ and $0.616$ against the source's $0.691$), while PLOC leaves AUROC essentially unchanged ($0.690$ vs.\ $0.691$) and lifts F$_1$ from $0.50$ to $0.696$ and reduces ECE from $0.157$ to $0.039$. The same degradation of the compared test-time adaptation methods and the same correction by PLOC thus appear on all three architectures, including the weakest-ranking one (TabTransformer). This indicates that the miscentered operating point is a recurring consequence of the distribution shift itself, which PLOC corrects, rather than a peculiarity of any single architecture that PLOC exploits. Deferred PLOC (Table~\ref{tab:deferred}) matches source AUROC to three decimals on both backbones while retaining the threshold gains.

\begin{table*}[!t]
\centering
\caption{Offline reference: Deferred PLOC makes one unlabeled pass over the full stream before scoring (Section~\ref{sec:method}), so its information access corresponds to $B{=}T$ rather than $B{=}1$. It is therefore reported separately from the singleton comparison of Table~\ref{tab:main}, with the source rows repeated for reference. Deferred PLOC matches PLOC within $0.002$ on every metric and preserves source AUROC exactly (Proposition~\ref{prop:auroc}).}
\label{tab:deferred}
\small
\begin{tabular}{llccccccc}
\toprule
Backbone & Method & Acc$\uparrow$ & BAcc$\uparrow$ & F$_1$$\uparrow$ & AUROC$\uparrow$ & ECE$\downarrow$ & NLL$\downarrow$ & Brier$\downarrow$ \\
\midrule
\multirow{2}{*}{MLP}
 & Source & 0.612 & 0.638 & 0.580 & 0.733 & 0.160 & 0.732 & 0.252 \\
 & Deferred PLOC & 0.662 & 0.670 & 0.695 & 0.733 & 0.061 & 0.612 & 0.211 \\
\midrule
\multirow{2}{*}{FT-Transformer}
 & Source & 0.613 & 0.637 & 0.539 & 0.718 & 0.160 & 0.734 & 0.258 \\
 & Deferred PLOC & 0.653 & 0.662 & 0.684 & 0.718 & 0.058 & 0.607 & 0.213 \\
\midrule
\multirow{2}{*}{TabTransformer}
 & Source & 0.573 & 0.605 & 0.500 & 0.691 & 0.157 & 0.731 & 0.257 \\
 & Deferred PLOC & 0.628 & 0.642 & 0.697 & 0.691 & 0.040 & 0.614 & 0.213 \\
\bottomrule
\end{tabular}
\end{table*}

\paragraph{The AUROC--accuracy tradeoff} Figure~\ref{fig:trade} plots, for each method, the mean change in accuracy against the mean change in AUROC relative to the source. PLOC and deferred PLOC appear near zero AUROC change with positive accuracy change. In contrast, batch methods (FTAT, PFT3A) sit in the lower-left, i.e., at $B{=}1$. They reduce ranking quality without consistent threshold-metric gains, with FTAT collapsing on the large ACS Public Coverage stream. In this panel, batch and weight adaptation reduce the ranking quality used by downstream thresholds and risk scores, whereas a logit-origin shift improves accuracy while leaving the ranking intact.

\begin{figure*}[!htbp]
\centering
\includegraphics[width=\textwidth]{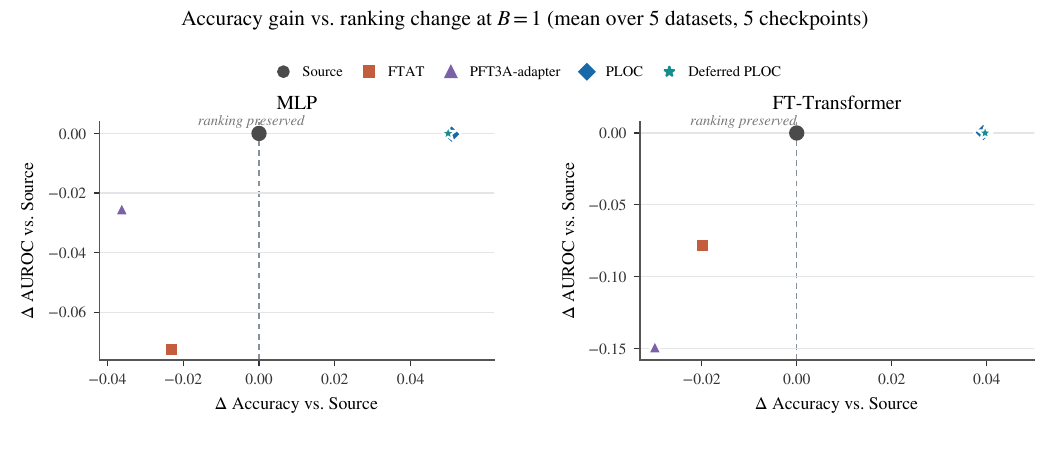}
\caption{Change in accuracy versus change in AUROC relative to the frozen source model at $B{=}1$ (mean over five datasets and five checkpoints). PLOC gains accuracy while keeping AUROC at the source value, and deferred PLOC coincides with PLOC, reflecting the ranking-preservation guarantee. The batch methods (FTAT, PFT3A-adapter) lose on both axes.}
\label{fig:trade}
\end{figure*}

\paragraph{Where the gains come from} Figure~\ref{fig:perds} breaks F$_1$ down by dataset. Performance gains are concentrated primarily in scenarios where the source model's operating point is severely misaligned. On the HELOC dataset, the source underpredicts the positive class so severely that, at the default cutoff, it labels only $16\%$ of applicants as likely to repay (the positive class), achieving $0.512$ accuracy, $0.277$ F$_1$, and $0.200$ ECE, while AUROC remains $0.684$. PLOC, observing a running logit mean of about $-0.81$, recenters the origin and raises the fraction of applicants labeled as likely to repay from $16\%$ to $55\%$, lifting the accuracy to $0.641$ and F$_1$ to $0.677$ and reducing ECE to $0.047$, with the AUROC unchanged at $0.684$ (Table~\ref{tab:perds}). With the FT-Transformer backbone, the source predicts a single class at the default threshold on HELOC ($\mathrm{F}_1=0$), which PLOC restores. On the ANES dataset, where the source operating point is already well-centered, PLOC has a negligible impact on the accuracy, consistent with the self-modulation of Section~\ref{sec:theory}.

\begin{figure*}[t]
\centering
\includegraphics[width=\textwidth]{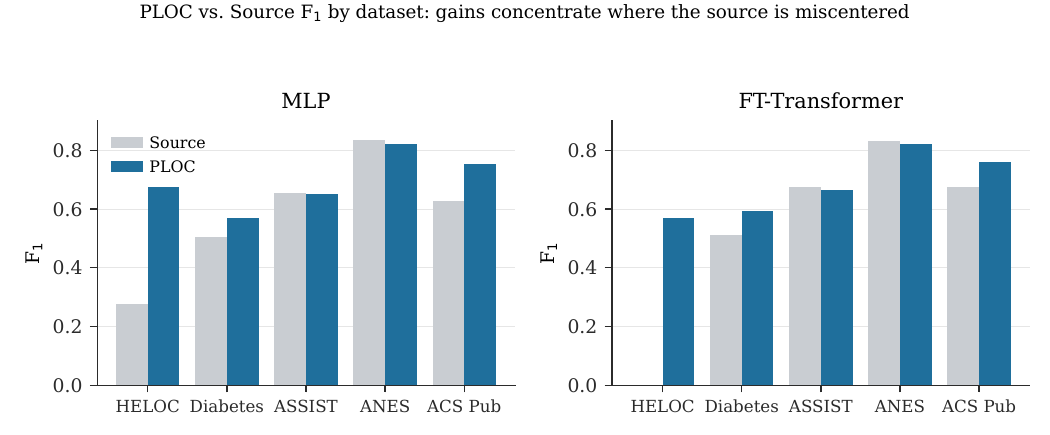}
\caption{PLOC versus source F$_1$ by dataset. Gains concentrate where the source is miscentered (HELOC, ACS Public Coverage) and PLOC is near-neutral where it is already well centered (ANES).}
\label{fig:perds}
\end{figure*}

\paragraph{Per-dataset results}
Tables~\ref{tab:perds}--\ref{tab:perds_tabt} report the source-versus-PLOC comparison per dataset for each backbone, making explicit where the aggregate gains in Table~\ref{tab:main} originate. The empirical AUROC changes are small: within $0.001$ everywhere except ASSISTments with the TabTransformer, where the change reaches $0.005$. Accuracy, F$_1$, and ECE improve most on the miscentered streams (HELOC, ACS Public Coverage) and are near-neutral where the source is already well centered (ANES).

\begin{table*}[!htbp]
\centering
\caption{Per-dataset MLP results, source $\to$ PLOC, mean over five checkpoints. AUROC is preserved to within $0.001$ on every dataset. Threshold and calibration gains concentrate on the miscentered streams.}
\label{tab:perds}
\small
\begin{tabular}{lcccc}
\toprule
Dataset & Acc$\uparrow$ & F$_1$ $\uparrow$ & AUROC$\uparrow$ & ECE$\downarrow$ \\
\midrule
HELOC & $0.512\!\to\!0.641$ & $0.277\!\to\!0.677$ & $0.684\!\to\!0.684$ & $0.200\!\to\!0.047$ \\
Diabetes Readmission & $0.588\!\to\!0.592$ & $0.507\!\to\!0.569$ & $0.627\!\to\!0.627$ & $0.123\!\to\!0.104$ \\
ASSISTments & $0.538\!\to\!0.581$ & $0.654\!\to\!0.650$ & $0.684\!\to\!0.683$ & $0.316\!\to\!0.092$ \\
ANES & $0.786\!\to\!0.785$ & $0.835\!\to\!0.823$ & $0.859\!\to\!0.859$ & $0.061\!\to\!0.040$ \\
ACS Public Coverage & $0.637\!\to\!0.717$ & $0.629\!\to\!0.753$ & $0.809\!\to\!0.809$ & $0.103\!\to\!0.015$ \\
\bottomrule
\end{tabular}
\end{table*}

\begin{table*}[!htbp]
\centering
\caption{Per-dataset FT-Transformer results, source $\to$ PLOC, mean over five checkpoints. AUROC is preserved to within $0.001$ on every dataset (HELOC's $\mathrm{F}_1=0$ source is a single-class collapse that PLOC repairs).}
\label{tab:perds_ft}
\small
\begin{tabular}{lcccc}
\toprule
Dataset & Acc & F$_1$ & AUROC & ECE \\
\midrule
HELOC & $0.431\!\to\!0.546$ & $0.000\!\to\!0.569$ & $0.561\!\to\!0.561$ & $0.327\!\to\!0.041$ \\
Diabetes Readmission & $0.607\!\to\!0.615$ & $0.513\!\to\!0.594$ & $0.658\!\to\!0.658$ & $0.039\!\to\!0.015$ \\
ASSISTments & $0.582\!\to\!0.587$ & $0.675\!\to\!0.666$ & $0.690\!\to\!0.691$ & $0.316\!\to\!0.187$ \\
ANES & $0.777\!\to\!0.790$ & $0.833\!\to\!0.822$ & $0.866\!\to\!0.866$ & $0.043\!\to\!0.024$ \\
ACS Public Coverage & $0.667\!\to\!0.722$ & $0.675\!\to\!0.760$ & $0.813\!\to\!0.813$ & $0.073\!\to\!0.014$ \\
\bottomrule
\end{tabular}
\end{table*}

\begin{table*}[!htbp]
\centering
\caption{Per-dataset TabTransformer results, source $\to$ PLOC, mean over five checkpoints. AUROC remains close to the source across all datasets. PLOC is near-neutral (slightly negative) on the already-centered ANES, the self-modulation of Section~\ref{sec:theory}.}
\label{tab:perds_tabt}
\small
\begin{tabular}{lcccc}
\toprule
Dataset & Acc & F$_1$ & AUROC & ECE \\
\midrule
HELOC & $0.488\!\to\!0.648$ & $0.216\!\to\!0.684$ & $0.693\!\to\!0.693$ & $0.229\!\to\!0.028$ \\
Diabetes Readmission & $0.566\!\to\!0.566$ & $0.306\!\to\!0.632$ & $0.595\!\to\!0.595$ & $0.063\!\to\!0.027$ \\
ASSISTments & $0.438\!\to\!0.443$ & $0.608\!\to\!0.604$ & $0.516\!\to\!0.511$ & $0.260\!\to\!0.062$ \\
ANES & $0.793\!\to\!0.780$ & $0.840\!\to\!0.807$ & $0.864\!\to\!0.864$ & $0.052\!\to\!0.039$ \\
ACS Public Coverage & $0.579\!\to\!0.709$ & $0.530\!\to\!0.751$ & $0.786\!\to\!0.786$ & $0.182\!\to\!0.039$ \\
\bottomrule
\end{tabular}
\end{table*}

\paragraph{Checkpoint stability} Beyond improving the mean performance, PLOC also reduces the variance across the five independently trained source checkpoints, in the dataset--backbone combinations where the source operating point is unstable. With the MLP backbone on the HELOC dataset, the F$_1$ standard deviation falls from $0.249$ (source) and $0.223$ (FTAT) to $0.021$ (PLOC), a tenfold reduction. With the FT-Transformer backbone on the ASSISTments dataset, FTAT's F$_1$ standard deviation is $0.189$ against PLOC's $0.009$. Aggregated over datasets, the across-seed standard deviation of F$_1$ drops from $0.057$ to $0.012$ on the MLP backbone and from $0.071$ to $0.006$ on the TabTransformer backbone. On already-stable datasets, the methods are comparable. PLOC removes operating-point instability where it exists and has a negligible effect where it does not.

\paragraph{Statistical significance} Because every result is computed on the same five independently trained source checkpoints, the runs are paired, and we can complement the reported means with paired statistical tests. Across the $25$ dataset--seed combinations per backbone, PLOC's improvements over both the source and FTAT on accuracy, F$_1$, and calibration are significant under a Wilcoxon signed-rank test: relative to the source, $\Delta$Acc and $\Delta$F$_1$ are positive with $p<10^{-2}$ and $p<3\times10^{-3}$ on all three backbones, and the ECE reduction is highly significant ($p<10^{-6}$, with ECE improving in $23$ to $25$ of the $25$ combinations). The AUROC difference between PLOC and the source is statistically indistinguishable from zero ($\Delta\mathrm{AUROC}=0.000\pm0.001$, $p\in[0.15,0.88]$), consistent with the vanishing ranking drift predicted by Proposition~\ref{prop:drift}. The exact preservation guarantee of Proposition~\ref{prop:auroc} applies to deferred PLOC and holds algebraically. Against FTAT, by contrast, PLOC's AUROC is significantly \emph{higher} ($\Delta\mathrm{AUROC}\approx+0.07$ to $+0.09$, $p<10^{-2}$ on every backbone), because FTAT trades ranking away at $B{=}1$ while PLOC does not.

\FloatBarrier
\section{Batch size confirms a distinct regime}
\label{sec:batch}
Next, we study whether the model behavior we observe for a test batch size of 1 is simply the natural extreme case of what happens with very small batches. To analyze this, we sweep the batch size from one to $1024$ on the four non-ACS benchmarks (three seeds, both backbones; see Figure~\ref{fig:batch}). We observe that PLOC and deferred PLOC are very stable: AUROC holds at the source value across the whole range, and deferred PLOC is batch-invariant by construction. The batch methods recover only once the test batches become larger. FTAT's ranking damage shrinks monotonically and nearly closes by $B{=}1024$ ($0.677\!\to\!0.701$ against source $0.707$). PFT3A cannot be run at $B{=}1$ in its official form: its class-prior and covariance estimators are defined over a batch of test points, and a covariance estimated from a single sample does not exist. Our guarded singleton adaptation, therefore, keeps its entropy-based update, skips the covariance and subspace terms whenever the batch is too small to estimate them, and guards the prior update against degenerate single-sample estimates. Even with larger batches, it is least stable at batch sizes $B\in\{4,8\}$, where its prior and covariance estimators run on far too few samples. It recovers to the source-level AUROC when the batch size $B\approx64$. At $B\ge512$, our integration reproduces PFT3A's reported batched threshold metrics on the shared datasets (MLP Accuracy/F$_1$ of $69.4/74.5$ against the published $69.3/74.2$), which confirms that the integration is faithful and that the $B{=}1$ failure is a property of the singleton regime rather than an artifact of our code. LAME inverts the pattern: at $B{=}1$ it returns the source predictions exactly unchanged, since its affinity graph consists of a single node with no neighbors to smooth over, and its Laplacian refinement progressively destroys ranking as the batch grows ($0.70\!\to\!0.57$). Singleton tabular fully test time adaptation is therefore not simply a small-batch variant of the FTTA. This setting is much harder for FTTA methods, since their batch-level estimates require batches that simply do not exist in the singleton setting, and the proposed PLOC is the only compared method that is both defined at $B{=}1$ and stable across the whole range.

\begin{figure}[!t]
\centering
\includegraphics[width=0.7\textwidth]{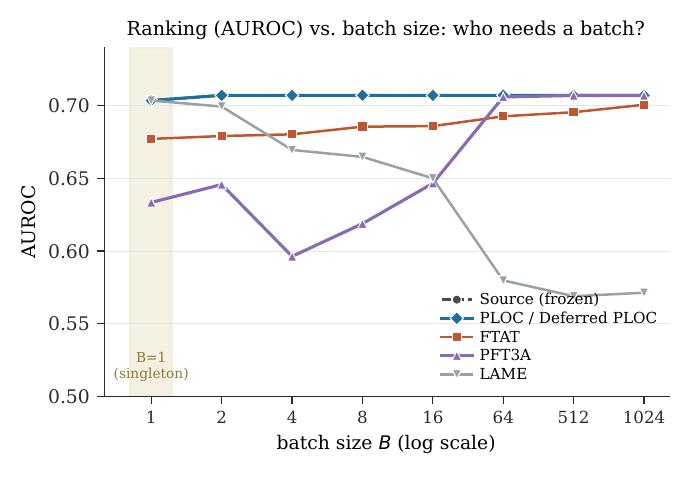}
\caption{Ranking quality versus batch size (mean AUROC over the four non-ACS datasets, both backbones, three seeds). PLOC/deferred PLOC (overlapping source) hold source AUROC at every batch size $B$. FTAT recovers gradually. PFT3A is undefined at $B{=}1$, least stable at $B\in\{4,8\}$, and recovers by $B\approx64$. LAME leaves the source predictions unchanged at $B{=}1$ but degrades as its graph grows.}
\label{fig:batch}
\end{figure}

\FloatBarrier
\section{Behavior under controlled prior shift}
\label{sec:scope}
The proposed PLOC corrects an operating point and is not a label-prior estimator. We test this distinction under deliberate, estimable prior shift. We resample each target stream to enforce positive rates $\pi_+\in\{0.1,0.3,0.5,0.7,0.9\}$ (bootstrapping the small ASSISTments stream) and evaluate at $B{=}1$ over five checkpoints and three sample seeds (Figure~\ref{fig:prior}). Across the range, PLOC and deferred PLOC track or beat the source model on balanced accuracy and calibration, and preserve AUROC exactly (deferred). Under extreme imbalance ($\pi_+=0.1$), however, they lose plain accuracy: by moving the operating point toward a balanced regime they give up the advantage that a trivial majority predictor enjoys on raw accuracy. This follows from the fact that PLOC is not a prior estimator. PLOC is the right tool when the source ranking is informative, but the operating point has drifted in the settings studied here. It is not a replacement for a label-shift estimator when the prior shift is both extreme and estimable from a batch.

\begin{figure}[!htbp]
\centering
\includegraphics[width=0.62\textwidth]{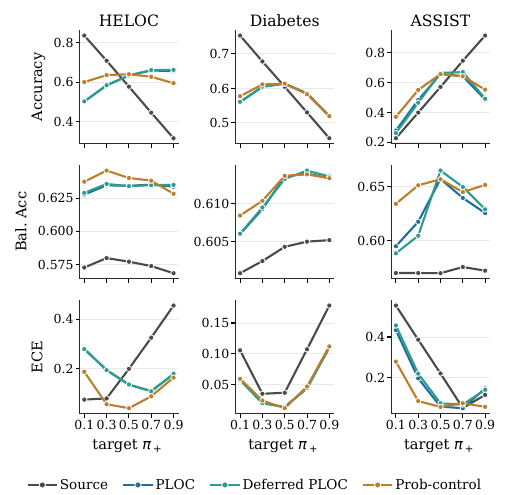}
\caption{Behavior under controlled target prior shift. PLOC and deferred PLOC track or beat the source model on balanced accuracy and ECE across the range, but under extreme imbalance ($\pi_+=0.1$), they trade plain accuracy for a balanced operating point. AUROC is preserved at every prior (deferred; omitted, $\Delta=0$).}
\label{fig:prior}
\end{figure}

\FloatBarrier
\section{Design ablations}
\label{sec:ablation}
We use PLOC in its simplest form, the prequential mean of past logits, and test whether a more elaborate origin estimator would do better. Table~\ref{tab:ablation} reports a single-seed ablation (seed $0$, MLP and FT-Transformer backbones, averaged over the five datasets) that varies only the origin estimator. The absolute numbers are not comparable to Table~\ref{tab:main}, which averages over all five checkpoints and three backbones. The comparison of interest is each variant against the plain prequential mean.

The ablation gives three observations. First, the plain prequential mean turns out to be hard to beat: no alternative estimator improves on it consistently across the reported metrics. An exponential moving average (EMA) matches it at slow decay and slightly damages ranking and calibration as the decay rate grows ($\alpha=0.2$). A sliding window over the last $100$ logits is indistinguishable from the full prequential mean. A running $z$-score (dividing by the running standard deviation as well as centering) leaves accuracy untouched while worsening calibration. Second, the only variant that improves on any metric is the Huber-clipped running mean, which lowers ECE modestly ($0.047\to0.033$) at the cost of F$_1$ and an extra clipping threshold. We prefer the hyperparameter-free mean. Third, the last row of Table~\ref{tab:ablation} reports a label-using oracle that selects the single shift minimizing the negative log-likelihood on the target stream. Even with access to the labels, it improves on PLOC's fully unlabeled estimate only marginally (F$_1$ $0.701$ versus $0.692$, ECE $0.027$ versus $0.047$), indicating that the unlabeled prequential mean is already close to the best label-selected single shift. These variants may be useful under non-stationary drift, where a forgetting estimate is the natural generalization. Under the stationary streams studied here, the minimal mean is the preferred default.

\begin{table}[!t]
\centering
\caption{Single-seed origin-estimator ablation (seed $0$, MLP and FT-Transformer backbones, averaged over the five datasets). Absolute values are not comparable to Table~\ref{tab:main}, which averages over all five checkpoints and three backbones. Each variant should be read against \emph{PLOC (prequential mean)}. The last row is an oracle that uses the target labels to select the NLL-minimizing single shift. It is a label-using reference, not a method, and not an upper bound for every displayed metric.}
\label{tab:ablation}
\small
\begin{tabular}{lccccc}
\toprule
Origin estimator & Acc$\uparrow$ & BAcc$\uparrow$ & F$_1$$\uparrow$ & AUROC$\uparrow$ & ECE$\downarrow$ \\
\midrule
Source & 0.625 & 0.649 & 0.588 & 0.733 & 0.149 \\
\textbf{PLOC (mean)} & 0.660 & 0.670 & 0.692 & 0.733 & 0.047 \\
EMA, $\alpha{=}0.01$ & 0.659 & 0.670 & 0.691 & 0.732 & 0.050 \\
EMA, $\alpha{=}0.2$ & 0.654 & 0.660 & 0.679 & 0.718 & 0.058 \\
Window, $w{=}100$ & 0.660 & 0.669 & 0.692 & 0.731 & 0.048 \\
Running $z$-score & 0.660 & 0.670 & 0.692 & 0.733 & 0.078 \\
Huber-clip, $\tau{=}2.5$ & 0.665 & 0.670 & 0.682 & 0.733 & 0.033 \\
\midrule
Oracle shift (uses labels) & 0.681 & 0.658 & 0.701 & 0.733 & 0.027 \\
\bottomrule
\end{tabular}
\end{table}

\FloatBarrier

\section{A generalization probe beyond tabular data}
\label{sec:vision}
PLOC is motivated by and evaluated on tabular streams, but its mechanism, recentering the output-logit origin, is modality-agnostic. As an illustrative probe we first applied it to a single-seed two-class CIFAR-10 corruption stream scored one image at a time by a small frozen convolutional network (Table~\ref{tab:vision}). The pattern is the same as in the tabular setting, and more pronounced. At batch size one the entropy methods either leave the source predictions unchanged or collapse: Tent falls to chance ($0.502$ accuracy) and EATA and SAR degenerate to a single-class predictor (F$_1=0$), whereas PLOC raises accuracy from $0.765$ to $0.831$ and cuts ECE from $0.125$ to $0.048$ while leaving AUROC unchanged at $0.902$. We report this only as evidence that the singleton pathology and the origin-centering fix are not specific to the tabular experiments. It is a probe, not a contribution, and a full vision study is out of scope here.

\begin{table}[!htbp]
\centering
\caption{Single-seed generalization probe: two-class CIFAR-10 corruption stream at $B{=}1$, small frozen CNN. Illustrative only.}
\label{tab:vision}
\small
\begin{tabular}{lccccc}
\toprule
Method & Acc & BAcc & F$_1$ & AUROC & ECE \\
\midrule
Source & 0.765 & 0.765 & 0.733 & 0.902 & 0.125 \\
Tent & 0.502 & 0.502 & 0.239 & 0.504 & 0.298 \\
EATA & 0.500 & 0.500 & 0.000 & 0.660 & 0.177 \\
SAR & 0.500 & 0.500 & 0.000 & 0.661 & 0.177 \\
LAME & 0.764 & 0.764 & 0.734 & 0.900 & 0.126 \\
\textbf{PLOC} & \textbf{0.831} & \textbf{0.831} & \textbf{0.831} & 0.902 & \textbf{0.048} \\
Deferred PLOC & 0.831 & 0.831 & 0.831 & 0.902 & 0.048 \\
PLOC-prob & 0.810 & 0.810 & 0.821 & 0.895 & 0.054 \\
\bottomrule
\end{tabular}
\end{table}

To probe the general $K$-class form of Section~\ref{sec:method}, we repeat the experiment on the full ten-class CIFAR-10 task over three independently trained checkpoints: a small frozen CNN scores one image at a time, and PLOC-K centers the ten logits coordinate-wise by their prequential running mean. We evaluate each checkpoint on thirteen singleton streams. The first is the unmodified CIFAR-10 test set, which serves as a control where no adaptation should be needed. The next eight apply a fixed image corruption to every test image: four corruption types (Gaussian noise, brightness, contrast, and blur), each at a moderate and a severe level (severities $3$ and $5$), with the class proportions kept balanced. The remaining four probe label shift: the test pool (clean, or with severity-$3$ Gaussian noise) is resampled to a long-tailed class distribution in which the most frequent class outnumbers the rarest by a factor $\rho$, for $\rho=10$ and $\rho=100$. Together, this gives $1+8+4=13$ streams per checkpoint. We observe that the results (Table~\ref{tab:vision10}) show a similar pattern as in the tabular datasets. Tent, EATA, and SAR again collapse to the $0.10$ chance level on every stream, including the control (accuracy between $0.087$ and $0.114$ over all thirty-nine stream--seed runs), and LAME returns the source predictions exactly unchanged on every stream, since its affinity graph is again a single node at $B{=}1$. PLOC-K leaves the control essentially untouched (accuracy $0.836\to0.834$, the self-modulation of Section~\ref{sec:theory}) while recovering large threshold and calibration losses on corrupted streams (e.g., accuracy $0.418\to0.581$ and ECE $0.212\to0.027$ under severity-5 contrast). Under pure label shift on clean images, it trades raw accuracy for a balanced operating point, consistent with Section~\ref{sec:scope}. Both predictions of Remark~\ref{rem:multiclass} are confirmed: deferred one-vs-one AUROC is unchanged to numerical precision on all thirty-nine stream--seed runs, while one-vs-rest AUROC shifts by up to $+0.07$, predominantly improving on shifted streams.

\begin{table}[!htbp]
\centering
\caption{Ten-class CIFAR-10 probe at $B{=}1$, averaged over three checkpoints, on six representative streams. The first two columns report the accuracy of the compared methods: Tent, EATA, and SAR coincide to three decimals on every stream (one column covers all three), and LAME returns predictions identical to the source. The middle columns show the value for the source model followed by the value after applying PLOC-K. The last two columns test Remark~\ref{rem:multiclass} for deferred PLOC-K: $\Delta$OvO is its change in the pairwise-logit one-vs-one macro AUROC relative to the source, which the remark predicts to be exactly zero (the largest observed $|\Delta\mathrm{OvO}|$ over all runs is $1.1\times10^{-8}$, numerical noise), while $\Delta$OvR is its change in one-vs-rest macro AUROC, which is not protected and indeed shifts, mostly upward, on the shifted streams.}
\label{tab:vision10}
\footnotesize
\setlength{\tabcolsep}{3pt}
\begin{tabular}{lccccccc}
\toprule
 & \multicolumn{2}{c}{Baselines (Acc$\uparrow$)} & \multicolumn{3}{c}{Source $\to$ PLOC-K} & \multicolumn{2}{c}{Deferred} \\
\cmidrule(lr){2-3}\cmidrule(lr){4-6}\cmidrule(lr){7-8}
Stream & Tent/EATA/SAR & LAME & Acc$\uparrow$ & ECE$\downarrow$ & NLL$\downarrow$ & $\Delta$OvO & $\Delta$OvR \\
\midrule
clean (control) & $0.100$ & $0.836$ & $0.836\!\to\!0.834$ & $0.034\!\to\!0.038$ & $0.501\!\to\!0.514$ & $0$ & $-0.000$ \\
gaussian noise s3 & $0.100$ & $0.175$ & $0.175\!\to\!0.332$ & $0.517\!\to\!0.076$ & $3.681\!\to\!1.860$ & $0$ & $+0.043$ \\
contrast s5 & $0.100$ & $0.418$ & $0.418\!\to\!0.581$ & $0.212\!\to\!0.027$ & $2.008\!\to\!1.216$ & $0$ & $+0.020$ \\
blur s3 & $0.100$ & $0.254$ & $0.254\!\to\!0.401$ & $0.368\!\to\!0.052$ & $3.053\!\to\!1.676$ & $0$ & $+0.027$ \\
shift $\rho{=}100$, clean & $0.087$ & $0.821$ & $0.821\!\to\!0.662$ & $0.046\!\to\!0.141$ & $0.526\!\to\!1.119$ & $0$ & $-0.009$ \\
shift $\rho{=}100$, noise s3 & $0.087$ & $0.085$ & $0.085\!\to\!0.224$ & $0.607\!\to\!0.181$ & $4.576\!\to\!2.114$ & $0$ & $+0.049$ \\
\bottomrule
\end{tabular}
\end{table}

\FloatBarrier

\section{Discussion}
\paragraph{When to use PLOC} PLOC is appropriate when a frozen tabular classifier must be deployed on a drifted stream one sample at a time, when target labels are unavailable for adaptation, and when the failure mode is a miscentered operating point rather than a fundamental loss of discriminative power. These conditions are common in the production settings that motivate the work: a credit model carried into a new economic period, a readmission model moved between hospitals, a scoring model facing a new cohort. In each case, the model still ranks cases sensibly, with its AUROC surviving the shift, but the decision threshold that was right for the source data now sits in the wrong place for the target data, and a single scalar correction recovers most of the lost accuracy and calibration without changing the ranking in the deferred variant.

\paragraph{Relationship to calibration and label shift} PLOC can be read as the minimal member of a family. Temperature scaling rescales logits by a learned positive factor. PLOC instead \emph{shifts} the logit origin by an unlabeled running mean. Logit adjustment and classical label-shift correction reweight by class frequencies that must be known or estimated from a batch. PLOC estimates nothing about the prior and needs no batch. The deferred variant's exact AUROC preservation places it on the conservative side of the adaptation tradeoff for ranking: it can only move the threshold, never reorder the cases. The cost of this safety is the scope limit of Section~\ref{sec:scope}: when the target prior genuinely shifts, and a batch is available to estimate it, a dedicated label-shift estimator can do more than an origin shift.

\paragraph{Threats to validity} Our claims are scoped to tabular classification under the TableShift splits, three backbones, and five checkpoints. The PFT3A comparison rests on a code-level analysis of its singleton degeneracies together with a verified reproduction of its batched numbers. We report the results of a guarded version of PFT3A only as a runnable reference and label it as such. 
The multi-class form is validated only by the controlled probe of Section~\ref{sec:vision}. Our study does not cover additional source families such as tree ensembles or non-stationary streams.

\FloatBarrier
\section{Conclusion}
\label{sec:conclusion}
Singleton tabular FTTA is a distinct identifiability problem, not just a small-batch case of ordinary FTTA. At batch size one, the batch-estimated priors, neighborhoods, and adaptation losses that drive existing methods are weakly identified or undefined, and the standard entropy- and normalization-based test-time adaptation methods leave the source predictions unchanged or nearly unchanged. We showed that adapting only the output-logit origin of a frozen classifier (one scalar, no labels, no model update) is a simple and stable alternative: PLOC improves accuracy, balanced accuracy, F$_1$, calibration, and likelihood over FTAT while leaving source AUROC essentially unchanged across five datasets, three backbones, and five checkpoints, and deferred PLOC carries an exact AUROC-preservation guarantee. A batch-size sweep confirms that the batch-based FTTA methods recover only once a larger batch is available, while PLOC is stable throughout. We see PLOC as the singleton baseline that future tabular test-time adaptation methods should compare against. 

\section*{Declarations}
\noindent\textbf{Declaration of competing interest.} The authors declare that they have no known competing financial interests or personal relationships that could have appeared to influence the work reported in this paper.

\noindent\textbf{Data availability.} All datasets are public TableShift~\cite{tableshift} benchmarks under their out-of-distribution splits. Code and aggregated per-cell metrics will be released upon publication.

\noindent\textbf{CRediT author statement.} \textbf{Mayank Sharma:} Methodology, Investigation, Writing -- original draft. \textbf{Rohit Kumar Mourya:} Methodology, Investigation, Writing -- original draft. \textbf{Pratik Mazumder:} Conceptualization, Writing -- review \& editing, Supervision, Project administration.

\FloatBarrier


\begin{thebibliography}{99}\small
\bibitem{ftat} Z.~Zhou, K.-Y.~Yu, L.-Z.~Guo, and Y.-F.~Li. Fully Test-time Adaptation for Tabular Data. \emph{Proceedings of the AAAI Conference on Artificial Intelligence}, 39(21):23027--23035, 2025. doi:10.1609/aaai.v39i21.34466.
\bibitem{pft3a} R.~He and J.~Shi. Prior-free Tabular Test-time Adaptation. \emph{ICLR}, 2026.
\bibitem{tent} D.~Wang, E.~Shelhamer, S.~Liu, B.~Olshausen, and T.~Darrell. Tent: Fully Test-Time Adaptation by Entropy Minimization. \emph{ICLR}, 2021.
\bibitem{eata} S.~Niu, J.~Wu, Y.~Zhang, Y.~Chen, S.~Zheng, P.~Zhao, and M.~Tan. Efficient Test-Time Model Adaptation without Forgetting. \emph{Proceedings of the 39th International Conference on Machine Learning}, PMLR 162:16888--16905, 2022.
\bibitem{sar} S.~Niu, J.~Wu, Y.~Zhang, Z.~Wen, Y.~Chen, P.~Zhao, and M.~Tan. Towards Stable Test-Time Adaptation in Dynamic Wild World. \emph{ICLR}, 2023.
\bibitem{lame} M.~Boudiaf, R.~Mueller, I.~Ben Ayed, and L.~Bertinetto. Parameter-Free Online Test-Time Adaptation. \emph{CVPR}, pages 8344--8353, 2022.
\bibitem{cotta} Q.~Wang, O.~Fink, L.~Van Gool, and D.~Dai. Continual Test-Time Domain Adaptation. \emph{CVPR}, pages 7201--7211, 2022.
\bibitem{note} T.~Gong, J.~Jeong, T.~Kim, Y.~Kim, J.~Shin, and S.-J.~Lee. NOTE: Robust Continual Test-time Adaptation Against Temporal Correlation. \emph{NeurIPS}, 2022.
\bibitem{rotta} L.~Yuan, B.~Xie, and S.~Li. Robust Test-Time Adaptation in Dynamic Scenarios. \emph{CVPR}, pages 15922--15932, 2023.
\bibitem{guo} C.~Guo, G.~Pleiss, Y.~Sun, and K.~Q.~Weinberger. On Calibration of Modern Neural Networks. \emph{Proceedings of the 34th International Conference on Machine Learning}, PMLR 70:1321--1330, 2017.
\bibitem{saerens} M.~Saerens, P.~Latinne, and C.~Decaestecker. Adjusting the Outputs of a Classifier to New a Priori Probabilities: A Simple Procedure. \emph{Neural Computation}, 14(1):21--41, 2002. doi:10.1162/089976602753284446.
\bibitem{bbse} Z.~Lipton, Y.-X.~Wang, and A.~Smola. Detecting and Correcting for Label Shift with Black Box Predictors. \emph{Proceedings of the 35th International Conference on Machine Learning}, PMLR 80:3122--3130, 2018.
\bibitem{alexandari} A.~Alexandari, A.~Kundaje, and A.~Shrikumar. Maximum Likelihood with Bias-Corrected Calibration is Hard-To-Beat at Label Shift Adaptation. \emph{Proceedings of the 37th International Conference on Machine Learning}, PMLR 119:222--232, 2020.
\bibitem{rlls} K.~Azizzadenesheli, A.~Liu, F.~Yang, and A.~Anandkumar. Regularized Learning for Domain Adaptation under Label Shifts. \emph{ICLR}, 2019.
\bibitem{menon} A.~K.~Menon, S.~Jayasumana, A.~S.~Rawat, H.~Jain, A.~Veit, and S.~Kumar. Long-tail Learning via Logit Adjustment. \emph{ICLR}, 2021.
\bibitem{t3a} Y.~Iwasawa and Y.~Matsuo. Test-Time Classifier Adjustment Module for Model-Agnostic Domain Generalization. \emph{NeurIPS}, 2021.
\bibitem{dua} M.~J.~Mirza, J.~Micorek, H.~Possegger, and H.~Bischof. The Norm Must Go On: Dynamic Unsupervised Domain Adaptation by Normalization. \emph{CVPR}, pages 14765--14775, 2022.
\bibitem{fttransformer} Y.~Gorishniy, I.~Rubachev, V.~Khrulkov, and A.~Babenko. Revisiting Deep Learning Models for Tabular Data. \emph{NeurIPS}, 2021.
\bibitem{tabtransformer} X.~Huang, A.~Khetan, M.~Cvitkovic, and Z.~Karnin. TabTransformer: Tabular Data Modeling Using Contextual Embeddings. \emph{arXiv:2012.06678}, 2020.
\bibitem{gama} J.~Gama, R.~Sebasti\~ao, and P.~P.~Rodrigues. On Evaluating Stream Learning Algorithms. \emph{Machine Learning}, 90(3):317--346, 2013. doi:10.1007/s10994-012-5320-9.
\bibitem{tableshift} J.~Gardner, Z.~Popovic, and L.~Schmidt. Benchmarking Distribution Shift in Tabular Data with TableShift. \emph{NeurIPS}, 2023.
\bibitem{heloc} FICO. The Explainable Machine Learning Challenge. FICO Community, 2018. https://community.fico.com/s/explainable-machine-learning-challenge.
\bibitem{strack} B.~Strack, J.~P.~DeShazo, C.~Gennings, J.~L.~Olmo, S.~Ventura, K.~J.~Cios, and J.~N.~Clore. Impact of HbA1c Measurement on Hospital Readmission Rates: Analysis of 70,000 Clinical Database Patient Records. \emph{BioMed Research International}, 2014:781670, 2014. doi:10.1155/2014/781670.
\bibitem{assistments} N.~T.~Heffernan and C.~L.~Heffernan. The ASSISTments Ecosystem: Building a Platform that Brings Scientists and Teachers Together for Minimally Invasive Research on Human Learning and Teaching. \emph{International Journal of Artificial Intelligence in Education}, 24(4):470--497, 2014. doi:10.1007/s40593-014-0024-x.
\bibitem{anes} American National Election Studies. ANES Time Series Cumulative Data File [dataset and documentation]. September 16, 2022 version. University of Michigan and Stanford University, 2022.
\bibitem{folktables} F.~Ding, M.~Hardt, J.~Miller, and L.~Schmidt. Retiring Adult: New Datasets for Fair Machine Learning. \emph{NeurIPS}, 2021.
\bibitem{asoc_covshift} H.~J.~Song and S.~B.~Park. An Adapted Surrogate Kernel for Classification under Covariate Shift. \emph{Applied Soft Computing}, 69:435--442, 2018. doi:10.1016/j.asoc.2018.04.060.
\bibitem{asoc_transfer} T.~Han, C.~Liu, R.~Wu, and D.~Jiang. Deep Transfer Learning with Limited Data for Machinery Fault Diagnosis. \emph{Applied Soft Computing}, 103:107150, 2021. doi:10.1016/j.asoc.2021.107150.
\bibitem{asoc_dann} J.~Kim and J.~Lee. Instance-Based Transfer Learning Method via Modified Domain-Adversarial Neural Network with Influence Function: Applications to Design Metamodeling and Fault Diagnosis. \emph{Applied Soft Computing}, 123:108934, 2022. doi:10.1016/j.asoc.2022.108934.
\bibitem{asoc_m2m} S.~Wang, M.~Gao, H.~Wu, F.~Luo, F.~Jiang, and L.~Tao. Many-to-Many: Domain Adaptation for Water Quality Prediction. \emph{Applied Soft Computing}, 167:112381, 2024. doi:10.1016/j.asoc.2024.112381.
\bibitem{asoc_stream_ens} K.~K.~Wankhade, K.~C.~Jondhale, and S.~S.~Dongre. A Clustering and Ensemble Based Classifier for Data Stream Classification. \emph{Applied Soft Computing}, 102:107076, 2021. doi:10.1016/j.asoc.2020.107076.
\bibitem{asoc_online_inc} S.~Zhang, J.~Liu, and X.~Zuo. Adaptive Online Incremental Learning for Evolving Data Streams. \emph{Applied Soft Computing}, 105:107255, 2021. doi:10.1016/j.asoc.2021.107255.
\bibitem{asoc_imb_stream} J.~Klikowski and M.~Wo\'zniak. Deterministic Sampling Classifier with Weighted Bagging for Drifted Imbalanced Data Stream Classification. \emph{Applied Soft Computing}, 122:108855, 2022. doi:10.1016/j.asoc.2022.108855.
\bibitem{asoc_active_stream} Y.~Guo, J.~Pu, B.~Jiao, Y.~Peng, D.~Wang, and S.~Yang. Online Semi-Supervised Active Learning Ensemble Classification for Evolving Imbalanced Data Streams. \emph{Applied Soft Computing}, 155:111452, 2024. doi:10.1016/j.asoc.2024.111452.
\bibitem{asoc_credit} C.-F.~Wu, S.-C.~Huang, C.-C.~Chiou, and Y.-M.~Wang. A Predictive Intelligence System of Credit Scoring Based on Deep Multiple Kernel Learning. \emph{Applied Soft Computing}, 111:107668, 2021. doi:10.1016/j.asoc.2021.107668.
\bibitem{asoc_uq_calib} X.~Liu, L.~Zhang, W.~Huang, D.~Cheng, H.~Wu, and A.~Song. Fixing Deep Early Exit Ensembles for Sensor-Based Human Activity Recognition through Uncertainty Quantification. \emph{Applied Soft Computing}, 184:113861, 2025. doi:10.1016/j.asoc.2025.113861.
\end{thebibliography}
\end{document}